\newcommand{\first}[1]{\textbf{#1}}
\newcommand{\second}[1]{\underline{#1}}
\newcommand{\third}[1]{\emph{#1}}
\newcommand{\eqdef}{\ensuremath{\stackrel{\mbox{\upshape\tiny def.}}{=}}}
\newtheorem{thm}{Theorem}[section]
\newtheorem{definition}[thm]{Definition}
\newtheorem{lemma}[thm]{Lemma}
\newtheorem{cor}[thm]{Corollary}
\newtheorem{assumption}[thm]{Assumption}
\def\eqref#1{equation~\ref{#1}}
\def\1{\bm{1}}
\DeclareMathAlphabet{\mathsfit}{\encodingdefault}{\sfdefault}{m}{sl}
\SetMathAlphabet{\mathsfit}{bold}{\encodingdefault}{\sfdefault}{bx}{n}
\title{Scalable Message Passing Neural Networks: No Need for Attention in Large Graph \\ Representation Learning}
\author{Haitz Sáez de Ocáriz Borde \\
University of Oxford \\
\And
Artem Lukoianov \\
Massachusetts Institute of Technology \\
\AND
Anastasis Kratsios \\
McMaster University \& the Vector Institute \\
\And % This forces the new row for the last two authors
Michael Bronstein \\
University of Oxford \& AITHYRA \\
\And
Xiaowen Dong \\
University of Oxford
}
\begin{document}

\iclrfinalcopy
\maketitle

\begin{abstract}
We propose Scalable Message Passing Neural Networks (SMPNNs) and demonstrate that, by integrating standard convolutional message passing into a Pre-Layer Normalization Transformer-style block instead of attention, we can produce high-performing deep message-passing-based Graph Neural Networks (GNNs). This modification yields results competitive with the state-of-the-art in large graph transductive learning, particularly outperforming the best Graph Transformers in the literature, without requiring the otherwise computationally and memory-expensive attention mechanism. Our architecture not only scales to large graphs but also makes it possible to construct deep message-passing networks, unlike simple GNNs, which have traditionally been constrained to shallow architectures due to oversmoothing. Moreover, we provide a new theoretical analysis of oversmoothing based on universal approximation which we use to motivate SMPNNs. We show that in the context of graph convolutions, residual connections are necessary for maintaining the universal approximation properties of downstream learners and that removing them can lead to a loss of universality.
\end{abstract}

\section{Introduction}
\label{Introduction}

Traditionally, Graph Neural Networks (GNNs)~\citep{gnns} have primarily been applied to model functions over graphs with a relatively modest number of nodes. However, recently there has been a growing interest in exploring the application of GNNs to large-scale graph benchmarks, including datasets with up to a hundred million nodes~\citep{Hu2020OpenGB}. This exploration could potentially lead to better models for industrial applications such as large-scale network analysis in social media, where there are typically millions of users, or in biology, where proteins and other macromolecules are composed of a large number of atoms. This presents a significant challenge in designing GNNs that are scalable while retaining their effectiveness.

To this end, we take inspiration from the literature on Large Language Models (LLMs) and propose a simple modification to how GNN architectures are typically arranged. Our framework, Scalable Message Passing Neural Networks (SMPNNs), enables the construction of deep and scalable architectures that outperform the current state-of-the-art models for large graph benchmarks in transductive classification. More specifically, we find that following the typical construction of the Pre-Layer Normalization (Pre-LN) Transformer formulation~\citep{xiong2020on} and replacing attention with standard message-passing convolution is enough to outperform the best Graph Transformers in the literature. Moreover, since our formulation does not necessarily require attention, our architecture scales better than Graph Transformers. Attention can also be easily incorporated into our framework if needed; however, in general, we find that adding attention, at least for large-scale graph transductive learning, only leads to marginal improvements in performance at the cost of being more computationally demanding.

Our empirical observations, which demonstrate that SMPNNs can use many layers unlike traditional GNNs, are supported by recent theoretical studies on oversmoothing and oversharpening in graph convolutions~\citep{digiovanni2023understanding} and Transformers~\citep{dovonon2024setting}. These studies suggest the crucial role of residual connections in mitigating oversmoothing and low-frequency dominance in representations. It is worth noting, however, that the aforementioned works primarily approached this issue from a theoretical standpoint and did not scale to large graph datasets. Expanding upon previous theoretical studies on oversmoothing, we provide a universal approximation perspective. Specifically, we demonstrate that residual connections, such as those found in the Transformer block architecture and other newer blocks utilized in the LLM literature, like the Mamba block~\citep{gu2023mamba}, are essential for preserving the universal approximation properties of downstream classifiers.

\textbf{Contributions.} We propose SMPNNs, a framework designed to scale traditional message-passing GNNs. Our main contributions are the following: (i) The SMPNN architecture can \textit{scale to large graphs} thanks to its \(\mathcal{O}(E)\) graph convolution computational complexity, outperforming state-of-the-art Graph Transformers for transductive learning without using global attention mechanisms. It also enables \textit{deep message-passing} GNNs without suffering from oversmoothing, a problem that has traditionally limited these networks to shallow configurations. (ii) We theoretically analyze the advantages of our architecture compared to traditional convolutions over graphs from a universal approximation perspective. Importantly, unlike previous works, we do not rely on the asymptotic convergence of the system to explain its behavior. (iii) We perform extensive experiments in large-graph transductive learning as well as on smaller datasets and demonstrate that our model consistently outperforms recently proposed Graph Transformers and other traditional scalable architectures. We also conduct ablations and additional experiments to test the importance of the different components of our architecture.

\section{Background}

\textbf{Graph Convolutional Networks (GCNs)} implement a special type of message passing of the following form: $\mathbf{X}^{(l+1)} = \zeta\Big(
    \tilde{\mathbf{A}}\mathbf{X}^{(l)}\mathbf{W}^{(l)}
\Big),$ where $\zeta$ is an activation function, $\mathbf{W}^{(l)}$ is a learnable weight matrix, and $\tilde{\mathbf{A}}$ is computed as a function of the degree matrix and the adjacency matrix. The features are aggregated row-wise: $\Big(\tilde{\mathbf{A}}\mathbf{X}^{(l)}\Big)_i = c_{ii}\mathbf{x}_i^{(l)}+\sum_{j\in\mathcal{N}(v_{i})} c_{ij}\mathbf{x}_j^{(l)}\,;\, c_{ij}
    =
    \frac1{\sqrt{\mathrm{deg}(v_i)\mathrm{deg}(v_j)}}.$

\textbf{Transformers.} The Transformer architecture~\citep{Vaswani2017AttentionIA} is characterized by its all-to-all pairwise communication between nodes via the scaled dot-product attention mechanism. Transformers can be seen as an MPNN on a fully-connected graph~\citep{elucidating,bronstein2021geometric}, where all nodes are within the one-hop neighborhood of each other in every layer. This avoids issues in capturing long-range dependencies inherent to message passing, since they are all connected. However, Transformers discard the locality inductive bias of the input graph. Given input query, key, and value matrices, $\mathbf{Q}=\mathbf{X}\mathbf{W}_{Q}$, $\mathbf{K}=\mathbf{X}\mathbf{W}_{K}$, $\mathbf{V}=\mathbf{X}\mathbf{W}_{V}\in\mathbb{R}^{N \times D}$ (where for simplicity we assume the feature dimensionality of these matrices is kept the same as the original node features, $\mathbf{X}\in\mathbb{R}^{N\times D}$), the self-attention operation in the encoder blocks computes: $\mathbf{X}^{\prime}=\textrm{softmax}(\mathbf{Q}\mathbf{K}^{T}/\sqrt{D})\mathbf{V}.$ Attention is challenging to scale to large graphs due to its computational complexity of $\mathcal{O}(N^2)$ in the number of nodes. Large-scale Graph Transformers typically replace this operation with linear attention to avoid GPU memory overflow.

\textbf{Large Graph Transformers.} The emergence of large-scale graph-structured datasets such as social networks has led to increased recent interest in scaling GNNs to very large graphs with up to hundreds of millions of nodes. Large-scale graph learning tasks are often transductive learning settings, where one is given a fixed graph with partially labeled nodes and attempts to infer the properties of the missing nodes. While Graph Transformers currently produce state-of-the-art results, their main challenge is the computational complexity of attention for a large number of nodes, which is equivalent to very long context windows in LLMs. Techniques aiming at solving this problem include architectures such as Nodeformer~\citep{wu2022nodeformer}, which utilizes a kernelized Gumbel-Softmax operator, DIFFormer~\citep{Wu2023DIFFormerS}, which builds on recent advances in graph diffusion, and SGFormer~\citep{wu2023sgformer}, which utilizes a single linear ``attention'' operation without the softmax activation to avoid computational overhead when scaling to graphs with up to a hundred million nodes. Other architectures such as Exphormer~\citep{shirzad2023exphormer} use sparse attention mechanisms based on expander graphs to aid scaling. Additionally, Exphormer also uses linear global attention with respect to a virtual node in parallel to the expander graph. Some of the aforementioned architectures rely on the GPS Graph Transformer framework~\citep{Rampek2022RecipeFA}, which combines attention with standard message-passing GNN layers and computes both in parallel in the form of a hybrid architecture. Other scalable architectures that are not based on Transformers, such as SGC~\citep{pmlr-v97-wu19e} and SIGN~\citep{sign_icml_grl2020}, have been developed for large-scale graphs, but their performance is generally worse than that of Graph Transformers, as shown in Section~\ref{Experimental Validation}. 

Additional background on graph learning and message passing is provided in Appendix~\ref{Additional Background}.

\section{Scalable Message Passing Neural Networks}

\begin{wrapfigure}[24]{r}{0.5\textwidth}
\centering
\centerline{\includegraphics[width=\linewidth]{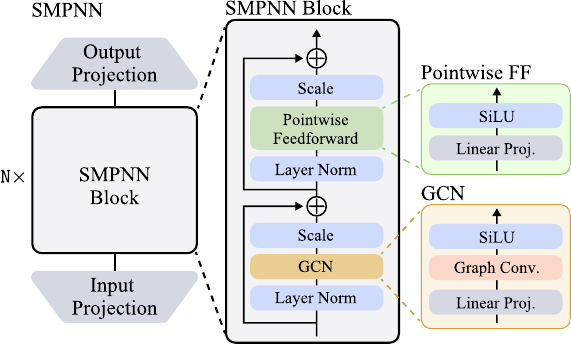}}
\caption{\textbf{The Scalable Message Passing Neural Network (SMPNN) architecture.} \textit{Left:} The full model is comprised of $\mathtt{N}$ transformer-style blocks stacked one after the other. The model also uses input and output feedforward layers to project node features to the hidden and output dimensions. \textit{Middle:} Architecture of a single SMPNN block as described in Section~\ref{The Scalable Message Passing Block}. \textit{Right:} Zoom into the GCN block and the pointwise feedforward network with SiLU activation functions.}
\label{fig:SMPNN_architecture}
\end{wrapfigure}

In the GNN literature, it has been shown that graph convolutions can exhibit asymptotic behaviors other than over-smoothing in the limit of many layers when equipped with residual connections~\citep{digiovanni2023understanding}. This theoretical observation aligns with well-known best practices in the LLM community~\citep{Vaswani2017AttentionIA,touvron2023llama,brown2020language}, which is highly experienced with both large-scale datasets and models. In this work, we aim to bridge the gap between the GNN and LLM literature and benefit from the cross-pollination of ideas.

\textbf{Packaging Attention and Message-Passing.} Attention has been much emphasized in the literature and, although it is certainly paramount, the importance of the other components around it should not be underestimated. Interestingly, in retrospect, the ubiquitous attention mechanism used in LLMs had already been proposed~\citep{bahdanau2016neural} before the Transformer architecture~\citep{Vaswani2017AttentionIA}; however, it was not until it was ``packaged'' into the Transformer block as we know it today, that it really led to the recent breakthroughs in language modeling. The scaled-dot-product attention mechanism (and its variants) in Transformers can be seen as message-passing attentional GNNs over a complete graph~\citep{elucidating,bronstein2021geometric}. It is well-known among practitioners in the LLM community that this operation struggles to learn effectively without residual connections. Yet, traditional GNN architectures simply stack message-passing layers one after the other~\citep{kipf2017semisupervised,schlichtkrull2017modeling,gat}, which seems to contradict the modus operandi of large-scale model engineering. 

\textit{The main motivation of our paper is to demonstrate that the standard architectural packaging approaches used in NLP and LLMs are also applicable to GNNs, and that the standard attention mechanism used in Transformers can be substituted with message-passing layers.}

\subsection{Architecture Design and the Scalable Message Passing Block}
\label{The Scalable Message Passing Block}

Drawing from theoretical observations in the GNN literature and best practices in language modeling, we propose the following architecture, which is similar to the Transformer and comprises two parts: initial nodewise message-passing, analogous to tokenwise communication, and a pointwise feedforward layer to transform the feature vector of each node in isolation. Unlike the Transformer, however, our architecture has linear rather than quadratic scaling.

\textbf{The Transformer Block.} The Transformer architecture has proven surprisingly robust, and the only main modification it has undergone since its inception is the change in the location of normalization, which is now applied before attention. The advantages of applying layer normalization~\citep{ba2016layer} before, as in the Pre-LN Transformer, have already been studied both theoretically~\citep{xiong2020on} and empirically in the literature~\citep{baevski2019adaptive,child2019generating,wang2019learning}, and we will also follow this in our implementation. Before the Transformer, residual connections were already a common component in most deep architectures for vision and language tasks since being popularized by the ResNet model~\citep{he2015deep}. In the LLM literature, even when alternatives to attention have been proposed, such as Mamba~\citep{gu2023mamba}, operations that perform token-wise communication (equivalent to message-passing between nodes) are always accompanied by \textit{residual connections}, as well as \textit{linear projections}, \textit{normalization}, and sometimes \textit{gatings}.

\textbf{Message-Passing.} We integrate a standard GCN layer into the Pre-LN Transformer-style block. In particular, a single block applies the following sequence of transformations to the input matrix of node features $\mathbf{X}^{(l)}\in\mathbb{R}^{N\times D}$, where $N$ is the number of nodes and $D$ is the dimensionality of the feature vectors:

\begin{equation}
    \mathbf{H}_{1}^{(l)} = \textrm{LayerNorm}(\mathbf{X}^{(l)}).
\end{equation}

Next, a GCN layer (plus additional components) is used for nodewise local communication instead of the standard global self-attention:

\begin{equation}
    \mathbf{H}_{2}^{(l)} = \alpha_1^{(l)}\operatorname{SiLU}
    \big(
        \tilde{\mathbf{A}}
        \mathbf{H}_{1}^{(l)}\mathbf{W}_{1}^{(l)}
    \big)+\mathbf{X}^{(l)},
    \label{eq:GCN_model}
\end{equation}

where $\tilde{\mathbf{A}}$, the degree-normalized adjacency matrix, is $\tilde{\mathbf{A}}_{i,j} = 1/\sqrt{\operatorname{deg}(v_i)\operatorname{deg}(v_j)}$ if $v_i$ and $v_j$ are neighbors and $0$ otherwise. $\mathrm{SiLU}(x)=\frac{x}{1+e^{-x}}$ and it is applied elementwise. We also introduce a scaling factor $\alpha_1^{(l)}$ which is initialized at $10^{-6}$ for applying identity-style block initialization~\citep{Peebles_2023_ICCV}. Importantly, the layer defined in equation~\ref{eq:GCN_model} contains a residual connection, the importance of which will be theoretically justified in Section~\ref{Theoretical Justification}.

\textbf{Pointwise feedforward.} The second part of the block is a pointwise transformation of the feature vectors preceded by another learnable normalization:
\begin{align}
    \mathbf{H}_{3}^{(l)} & = \textrm{LayerNorm}(\mathbf{H}_2^{(l)}),
\\
    \mathbf{X}^{(l+1)} & = \mathbf{H}_{4}^{(l)} = \alpha_2^{(l)}\operatorname{SiLU}(\mathbf{H}_3^{(l)}\mathbf{W}_2^{(l)})+\mathbf{H}_2^{(l)},
\end{align}
where similar to before, we introduce the learnable scaling $\alpha_2^{(l)}$, also initialized at $10^{-6}$.

Our \textit{Scalable Message Passing Neural Network block} is thus of the form $\mathbf{X}\mapsto \mathbf{H}_4$. In Figure~\ref{fig:SMPNN_architecture}, we display the complete SMPNN architecture at different levels of granularity, which consists of stacking multiple instances of the aforementioned block. In Appendix~\ref{Augmenting SMPNN with Attention}, we discuss how to augment SMPNNs with attention, although we find that this only leads to minor performance improvements, see Table~\ref{tab:main_experimental_results} in Section~\ref{Experimental Validation}.

\subsection{Computational Complexity and Comparison with Graph Transformers}
\label{Computational Complexity and Comparison with Graph Transformers}

In Table~\ref{tab:comp_complexity}, we compare the computational complexity of SMPNNs to that of various Graph Transformers in the literature. Our model's graph convolution layer inherits the computational cost of GCNs, $\mathcal{O}(E)$, assuming a sparse representation of the adjacency matrix~\citep{kipf2017semisupervised}. Graph Transformers such as SGFormer use both graph convolutions and linear attention, resulting in a total cost of $\mathcal{O}(N+E)$ in terms of nodewise communication operations. Note that although we are highlighting the complexity of graph convolution compared to that of linear attention, $\mathcal{O}(N)$, given our architecture also has other components that act pointwise such as linear layers (like all architectures in general) it is more accurate to think of the overall complexity of SMPNNs as being $\mathcal{O}(N+E)$. Unlike other models, we do not apply $\mathcal{O}(N^3)$ pre-processing steps~\citep{dwivedi2021generalization,ying2021transformers,SAT,egt,Rampek2022RecipeFA}, which would be prohibitively expensive for the large graph datasets we are targeting. Additionally, our model does not require positional encodings, attention, augmented training loss, or edge embeddings to achieve competitive performance. We would like to highlight that the majority of Graph Transformers in Table~\ref{tab:comp_complexity} have been designed for smaller graphs and have only been demonstrated on datasets with thousands of nodes or fewer due to their quadratic complexity. Our main competitors are therefore NodeFormer~\citep{wu2022nodeformer}, DIFFormer~\citep{Wu2023DIFFormerS}, and SGFormer~\citep{wu2023sgformer}, which have managed to scale to millions of nodes using less expressive or simplified versions of attention without quadratic complexity.

\begin{table}[hbpt!]
\caption{Comparison of model components used by different Graph Transformers: Positional Encodings~(PE), Multihead Attention~(MA), Augmented Training Loss~(ATL), and Edge Embeddings~(EE). We also report whether the model uses All-Pair communication (typically implemented as some variant of attention), the Pre-processing and Training computational complexity, and the largest graph for which the method's performance has been reported. *Random regular expander graph generation is needed, and graphs that fail to be near-Ramanujan are discarded and regenerated.}
\begin{adjustbox}{max width=\textwidth}
\begin{tabular}{lcccccccc}
\hline
 Model  & \multicolumn{4}{|c|}{Model Components} & All-Pair & Pre-processing & Training & Largest Demo \\\midrule
   & PE & MA & ATL & EE &  &  &  \\\midrule
GraphTransformer~\citep{dwivedi2021generalization} & \checkmark & \checkmark & \ding{55} & \checkmark & \checkmark & $\mathcal{O}(N^3)$ & $\mathcal{O}(N^2)$ & 0.2K \\ 
Graphormer~\citep{ying2021transformers}       & \checkmark & \checkmark & \ding{55} & \checkmark  & \checkmark & $\mathcal{O}(N^3)$ & $\mathcal{O}(N^2)$ & 0.3K \\ 
GraphTrans~\citep{jain2021representing}       & \ding{55} & \checkmark & \ding{55} & \ding{55} & \checkmark & - & $\mathcal{O}(N^2)$ & 0.3K \\
SAT~\citep{SAT}              & \checkmark & \checkmark & \ding{55} & \ding{55} & \checkmark & $\mathcal{O}(N^3)$ & $\mathcal{O}(N^2)$ & 0.2K \\
EGT~\citep{egt}              & \checkmark & \checkmark & \checkmark & \checkmark & \checkmark & $\mathcal{O}(N^3)$ & $\mathcal{O}(N^2)$ & 0.5K \\
GraphGPS~\citep{Rampek2022RecipeFA}         & \checkmark & \checkmark & \ding{55} & \checkmark & \checkmark & $\mathcal{O}(N^3)$ & $\mathcal{O}(N+E)$ & 1.0K \\
Gophormer~\citep{Zhao2021GophormerET}        & \checkmark & \checkmark & \checkmark & \ding{55} & \ding{55} & - & $\mathcal{O}(Nsm^2)$ & 20K \\
Exphormer~\citep{shirzad2023exphormer}        & \checkmark & \checkmark & \ding{55} & \checkmark & \checkmark & * & $\mathcal{O}(N+E)$  & 132K \\ \midrule
NodeFormer~\citep{wu2022nodeformer}       & \checkmark & \checkmark & \checkmark & \ding{55} & \checkmark & - & $\mathcal{O}(N+E)$ & 2.0M \\
DIFFormer~\citep{Wu2023DIFFormerS}        & \ding{55} & \checkmark & \ding{55} & \ding{55} & \checkmark & - & $\mathcal{O}(N+E)$ & 1.6M \\
SGFormer~\citep{wu2023sgformer}         & \ding{55} & \ding{55} & \ding{55} & \ding{55} & \checkmark & - & $\mathcal{O}(N+E)$  & 100M \\ \midrule
SMPNN            & \ding{55} & \ding{55} & \ding{55} & \ding{55} & \ding{55} & - & $\mathcal{O}(N+E)$ & 100M \\
\hline

\end{tabular}
\end{adjustbox}
\label{tab:comp_complexity}
\end{table}

\section{Theoretical Justification}
\label{Theoretical Justification}

In this section, we theoretically justify our architectural design. We review some theoretical findings from the literature that hint at the importance of residual connections to avoid oversmoothing in message-passing convolutions over graphs in Appendix~\ref{appendix:Oversmoothing and Residual Connections}. Inspired by these results, we further extend the analysis through the lens of universal approximation, which does not rely on asymptotic behavior, unlike other works such as~\cite{digiovanni2023understanding}. For the proofs, see Appendix~\ref{appendix:Proofs}.

We consider a class of models to be expressive if it is a universal approximator, in the local uniform sense of~\cite{hornik1989multilayer,kidger2020universal,duan2023minimum,yarotsky2024structure}, meaning that it can approximately implement any continuous function uniformly on compact sets. We will use $\mathcal{C}(\mathbb{R}^{N\times D})$ to denote the set of continuous functions from $\mathbb{R}^{N\times D}$ to $\mathbb{R}$, when both are equipped with the (only) norm topologies thereon. 

\begin{definition}[Universal Approximator]
\label{defn:Universality}
A subset of models $\mathcal{F}\subseteq \mathcal{C}(\mathbb{R}^{N\times D})$ is said to be a \textit{universal approximator in $\mathcal{C}(\mathbb{R}^{N\times D})$}
if: for each continuous target function $f:\mathbb{R}^{N\times D}\to \mathbb{R}$, each uniform approximation error $\varepsilon>0$, and every non-empty compact set of inputs $K\subset \mathbb{R}^{N\times D}$, there exists some model $\hat{f}\in \mathcal{F}$ such that
\begin{equation}
\max_{\mathbf{X}\in K}\, |f(\mathbf{X})-\hat{f}(\mathbf{X})| < \varepsilon.
\end{equation}
\end{definition}

In what follows, for any activation function $\zeta:\mathbb{R}\to \mathbb{R}$ we use $\mathcal{MLP}_{N\times D}^{\zeta}=\mathcal{MLP}_{N\times D}$ to denote the class of MLPs with activation function $\zeta$, mapping $\mathbb{R}^{N\times D}$ to $\mathbb{R}$. We consider the following regularity condition on $\zeta$.
\begin{assumption}[Regular Activation]
\label{ass:activ}
The activation function $\zeta:\mathbb{R}\to\mathbb{R}$ is continuous and either: (i) \cite{pinkus1999approximation}: $\zeta$ is non-polynomial, (ii) \cite{kidger2020universal}: $\zeta$ is non-affine and there exists some $t\in \mathbb{R}$ at which $\zeta$ is continuously differentiable and $\zeta^{\prime}(t)\neq 0$.
\end{assumption}

We will consider two classes of models, highlighting the key benefits and drawbacks of adding versus omitting residual connections in graph convolution layers.

\begin{definition}[No Residual Connection Model Class]
Consider the class $\mathcal{F}_{\mathcal{G},\mathbf{W}}$ which consists of all maps $\hat{f}:\mathbb{R}^{N\times D}\to \mathbb{R}$
\begin{equation}
\label{eq:conv_wo_NOresidual}
    \hat{f} = \operatorname{MLP}\circ \mathcal{L}_{\mathcal{G},\mathbf{W}}^{\operatorname{conv}}\,;\,\mathcal{L}_{\mathcal{G},\mathbf{W}}^{\operatorname{conv}}(\mathbf{X})  \eqdef  \tilde{\mathbf{A}}\mathbf{X} \mathbf{W},
\end{equation}
where $\mathbf{W}\in \mathbb{R}^{D\times D}$ is a matrix,  $\operatorname{MLP}\in \mathcal{MLP}_{N\times D}$ and $\tilde{\mathbf{A}}$ is the degree-normalized adjacency matrix for the input graph $\mathcal{G}$. The class $\mathcal{F}_{\mathcal{G},\mathbf{W}}$ is a stylized model class consisting of a graph convolution layer followed by an MLP, without a residual connection.
\end{definition}

We benchmark this class against a simplified version of our SMPNN block.

\begin{definition}[Residual Connection Model Class] 
Again, fix $\mathbf{W}\in \mathbb{R}^{D\times D}$ and a graph $\mathcal{G}$ on $N$ nodes. This class, denoted by $\mathcal{F}^{\operatorname{residual}}_{\mathcal{G},\mathbf{W}}$, consists of maps $\hat{f}:\mathbb{R}^{N\times D}\to \mathbb{R}$ with representation 
\begin{equation}
\label{eq:conv_w_residual}
    \hat{f}  = \operatorname{MLP}\circ \mathcal{L}_{\mathcal{G},\mathbf{W}}^{\operatorname{conv+r}}\,;\,\hfill\mathcal{L}_{\mathcal{G},\mathbf{W}}^{\operatorname{conv+r}}(\mathbf{X}) \eqdef  \tilde{\mathbf{A}} \mathbf{X} \mathbf{W} + \mathbf{X}.
\end{equation}
\end{definition}

We emphasize that the MLPs within both sets, $\mathcal{F}_{\mathcal{G},\mathbf{W}}$ and $\mathcal{F}^{\operatorname{residual}}_{\mathcal{G},\mathbf{W}}$, are identical. Hence, there are no inherent advantages or disadvantages between them; the sole discrepancy lies in the presence or absence of a residual connection.   

The following set of results provides a counterexample to the universality of $\mathcal{F}_{\mathcal{G},\mathbf{W}}$ when $\mathcal{G}$ is a complete graph with self-loops (worst-case analysis) and further extends it to general graphs (like those found in practice).  We show that this deficit is not an issue when incorporating a residual connection as $\mathcal{F}_{\mathcal{G},\mathbf{W}}^{\text{residual}}$ is universal.

\begin{thm}[No Universal Approximation via Graph Convolution Alone]
\label{thrm:main_0__noresidual_NoUniversality}
Let $N,D$ be positive integers with $N \ge  2$.  Let $\mathcal{G}$ be a complete graph (with self-loops) on $N$ nodes.  For any weight matrix $\mathbf{W}\in \mathbb{R}^{D\times D}$, 
the class $\mathcal{F}_{\mathcal{G},\mathbf{W}}$ defined in~\eqref{eq:conv_wo_NOresidual} is not a universal approximator in $\mathcal{C}(\mathbb{R}^{N\times D})$.
\end{thm}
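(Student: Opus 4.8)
The plan is to exploit the fact that every element of $\mathcal{F}_{\mathcal{G},\mathbf{W}}$ factors through the single linear map $\mathcal{L}_{\mathcal{G},\mathbf{W}}^{\operatorname{conv}}$, so that non-universality follows as soon as this map fails to be injective. Concretely, if $\hat f = \operatorname{MLP}\circ \mathcal{L}_{\mathcal{G},\mathbf{W}}^{\operatorname{conv}}$ and $\mathcal{L}_{\mathcal{G},\mathbf{W}}^{\operatorname{conv}}(\mathbf{X}_1) = \mathcal{L}_{\mathcal{G},\mathbf{W}}^{\operatorname{conv}}(\mathbf{X}_2)$ for two distinct inputs, then necessarily $\hat f(\mathbf{X}_1)=\hat f(\mathbf{X}_2)$, regardless of the choice of MLP. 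Thus the entire class collapses each fiber of $\mathcal{L}_{\mathcal{G},\mathbf{W}}^{\operatorname{conv}}$ to a single value, whereas a generic continuous target does not. I would therefore first reduce the theorem to the claim that $\mathcal{L}_{\mathcal{G},\mathbf{W}}^{\operatorname{conv}}$ is not injective for every $\mathbf{W}$.

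The key structural step is to compute $\tilde{\mathbf{A}}$ on the complete graph. Since all nodes are mutually adjacent (and self-connected, as in full self-attention), $\tilde{\mathbf{A}}$ is, up to a positive constant, the all-ones matrix $\mathbf{J}=\mathbf{1}\mathbf{1}^\top\in\mathbb{R}^{N\times N}$, i.e. a rank-one averaging operator. Consequently $\mathcal{L}_{\mathcal{G},\mathbf{W}}^{\operatorname{conv}}(\mathbf{X})=\tilde{\mathbf{A}}\mathbf{X}\mathbf{W}$ has every row equal to $c\,(\sum_{i}\mathbf{x}_i)\mathbf{W}$, so it depends on $\mathbf{X}$ only through the aggregated feature $\sum_i \mathbf{x}_i$. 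The crucial point—and the reason the result holds for \emph{any} $\mathbf{W}$, including invertible ones—is that the singularity originates in the \emph{left} factor $\tilde{\mathbf{A}}$ and is untouched by right multiplication by $\mathbf{W}$: the kernel of $\mathcal{L}_{\mathcal{G},\mathbf{W}}^{\operatorname{conv}}$ contains the subspace $\{\mathbf{X}:\sum_i \mathbf{x}_i=\mathbf{0}\}$, of dimension $(N-1)D\ge 1$ whenever $N\ge 2$, hence is nontrivial.

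To finish, I would pick a nonzero $\mathbf{Z}$ in this kernel—for instance $\mathbf{z}_1=\mathbf{u}$, $\mathbf{z}_2=-\mathbf{u}$, and all other rows zero for some $\mathbf{u}\neq\mathbf{0}$, which exists precisely because $N\ge 2$—and set $\mathbf{X}_1=\mathbf{0}$, $\mathbf{X}_2=\mathbf{Z}$, so that $\mathcal{L}_{\mathcal{G},\mathbf{W}}^{\operatorname{conv}}(\mathbf{X}_1)=\mathcal{L}_{\mathcal{G},\mathbf{W}}^{\operatorname{conv}}(\mathbf{X}_2)$. Choosing the continuous target $f(\mathbf{X})=\|\mathbf{X}\|$ gives $f(\mathbf{X}_1)=0$ and $f(\mathbf{X}_2)=\|\mathbf{Z}\|=:\delta>0$. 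On the two-point compact set $K=\{\mathbf{X}_1,\mathbf{X}_2\}$ every $\hat f\in\mathcal{F}_{\mathcal{G},\mathbf{W}}$ satisfies $\hat f(\mathbf{X}_1)=\hat f(\mathbf{X}_2)$, so the triangle inequality gives $|f(\mathbf{X}_1)-\hat f(\mathbf{X}_1)|+|f(\mathbf{X}_2)-\hat f(\mathbf{X}_2)|\ge \delta$, forcing $\max_{\mathbf{X}\in K}|f(\mathbf{X})-\hat f(\mathbf{X})|\ge \delta/2$. Hence no $\hat f$ approximates $f$ to accuracy $\varepsilon<\delta/2$ on $K$, contradicting Definition~\ref{defn:Universality}.

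I expect the main obstacle to be pinning down the structure of $\tilde{\mathbf{A}}$ on the complete graph and arguing non-injectivity uniformly in $\mathbf{W}$: the naive expectation is that a nontrivial kernel should require $\mathbf{W}$ to be singular, whereas the true source of the deficiency is that left multiplication by the rank-one $\tilde{\mathbf{A}}$ already destroys injectivity, so the argument must be carried out independently of $\mathbf{W}$. Everything downstream—the factoring through $\mathcal{L}_{\mathcal{G},\mathbf{W}}^{\operatorname{conv}}$, the two-point separation, and the triangle-inequality bound—is routine once this rank-one collapse is established.
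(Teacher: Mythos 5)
Your proposal is correct. The structural core is identical to the paper's: you observe that on the complete graph $\tilde{\mathbf{A}}$ has all entries equal to $1/N$ (rank one), so $\mathcal{L}_{\mathcal{G},\mathbf{W}}^{\operatorname{conv}}$ fails to be injective for \emph{every} $\mathbf{W}$ because the defect sits in the left factor — this is exactly the paper's Lemma on loss of injectivity. Where you diverge is in how non-universality is deduced from non-injectivity. The paper treats this step as a black box, invoking the characterization in \cite{kratsios2020non} that precomposition with a continuous feature map preserves density if and only if the map is injective. You instead prove the relevant ``only if'' direction directly: you exhibit an explicit nonzero kernel element $\mathbf{Z}$ (rows $\mathbf{u}$, $-\mathbf{u}$, zeros), take the two-point compact set $K=\{\mathbf{0},\mathbf{Z}\}$ and the target $f(\mathbf{X})=\|\mathbf{X}\|$, note that every $\hat f$ in the class is constant on $K$, and conclude via the triangle inequality that the uniform error is at least $\|\mathbf{Z}\|/2$. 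Your route is more elementary and self-contained — it removes the dependence on the external characterization theorem and makes the obstruction concrete — at the cost of generality: the cited result also yields the converse (injectivity suffices), which the paper reuses for the positive residual-connection theorem, whereas your argument only delivers the negative direction. As a proof of this particular statement, yours is complete and arguably more transparent.
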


This is contrasted against our \textit{positive result}, which shows that the universal approximation property of MLPs is preserved when passing input features through graph convolution with a residual connection.  

\begin{lemma}[Injectivity criterion on the self-looped complete graph with residuals]
\label{lem:injectivity_complete_graph}
Let $N,D\ge 1$, let $\tilde{\mathbf A}=J_N/N$, and define $\mathcal L_{\mathcal G,\mathbf W}^{\operatorname{conv+r}}(\mathbf X)
:=
\tilde{\mathbf A}\mathbf X\mathbf W+\mathbf X.$ Then $\mathcal L_{\mathcal G,\mathbf W}^{\operatorname{conv+r}}$ is injective
if and only if $I_D+\mathbf W$ is invertible, equivalently $-1\notin \sigma(\mathbf W).$
\end{lemma}

\begin{thm}[Universality with residual connections]
\label{thrm:main_1_residual_Universal}
Let $\zeta$ satisfy Assumption~\ref{ass:activ}. Let $N,D\ge 1$,
let $\tilde{\mathbf A}=J_N/N$, and let $\mathbf W\in\mathbb R^{D\times D}$.
If $-1\notin \sigma(\mathbf W)$, then the class $\mathcal{F}^{\operatorname{residual}}_{\mathcal{G},\mathbf{W}}$ is dense in $\mathcal C(\mathbb R^{N\times D})$ for the topology
of uniform convergence on compact sets. In particular, if $\mathbf W$ has an absolutely continuous distribution on
$\mathbb R^{D\times D}$ (for example, i.i.d.\ Gaussian entries), then this
holds with probability one.
\end{thm}

The residual connections restore universality of the model class, while the pathological case where injectivity fails occurs with probability zero and is therefore (extremely) unlikely to arise in practice at initialization. Finally, we extend our results to general graphs like those found in experiments.

\begin{cor}[Sufficient condition for universality on general graphs]
\label{cor:universality_general_graph}
Let $\mathcal G$ be any graph with normalized adjacency matrix
$\tilde{\mathbf A}\in\mathbb R^{N\times N}$, and let
$\mathbf W\in\mathbb R^{D\times D}$. If the spectral norm
\[
\|\tilde{\mathbf A}\|_2\,\|\mathbf W\|_2<1,
\]
then $\mathcal L^{\operatorname{conv+r}}_{\mathcal G,\mathbf W}$
is injective, and therefore the model class
$\mathcal{F}^{\operatorname{residual}}_{\mathcal{G},\mathbf{W}}$
is dense in $\mathcal{C}(\mathbb R^{N\times D})$.
\end{cor}

Thus, in the presence of residual connections, the expressivity of the architecture is linked to the scale of the weights $\mathbf{W}$ relative to the graph connectivity operator $\tilde{\mathbf{A}}$. This could inspire initialization techniques to guarantee that at the onset of training, the model resides in a ``universal'' regime (which depends on the input graph) where node-level information is preserved rather than collapsed.

\section{Experimental Validation}
\label{Experimental Validation}

We experimentally validate our architecture and compare it to recent state-of-the-art baselines. We follow the evaluation protocol used in NodeFormer~\citep{wu2022nodeformer}, DIFFormer~\citep{Wu2023DIFFormerS}, and SGFormer~\citep{wu2023sgformer}. Additionally, we perform ablations and further experiments to verify the theoretical analysis presented in Section~\ref{Theoretical Justification}. For details, refer to Appendix~\ref{Training Details}.

\textbf{Large Scale Graph Datasets}  (Table~\ref{tab:main_experimental_results}). We compare the performance of SMPNNs to that of Graph Transformers tailored for large graph transductive learning, as well as other GNN baselines. Note that most Graph Transformer architectures (Table~\ref{tab:comp_complexity}) are difficult to scale to these datasets. We observe that SMPNNs consistently outperform SOTA architectures without the need for attention. Augmenting the base SMPNN model with linear attention (Appendix~\ref{Augmenting SMPNN with Attention}) leads to improvements in performance of under $1\%$ only, while substantially increasing computational overhead. For instance, in the case of the ogbn-products dataset, using the same hyperparameters, adding linear global attention with a single head increases the total number of model parameters from 834K to 2.4M. The resulting performance gain of only $0.18\%$ thus requires more than twice as many parameters.

\begin{table}[hbpt!]
\caption{Test results (mean ± standard deviation over 5 runs) on large graph datasets. We report ROC-AUC for ogbn-proteins and accuracy for all others, higher is better.}
\centering
\begin{adjustbox}{max width=\textwidth}
\scalebox{0.6}{
  \begin{tabular}{lcccc}
\hline
   Dataset    & \textbf{ogbn-proteins} & \textbf{pokec} & \textbf{ogbn-arxiv} & \textbf{ogbn-products}  \\ 

Nodes      & 132,534 & 1,632,803 & 169,343 & 2,449,029 \\
Edges      & 39,561,252 & 30,622,564 &  1,166,243 & 61,859,140 \\\hline

MLP  &   72.04 ± 0.48    &  60.15 ± 0.03   &   55.50 ± 0.23 &  63.46 ± 0.10 \\
GCN~\citep{kipf2017semisupervised}  &   72.51 ± 0.35   &  62.31 ± 1.13   &   71.74 ± 0.29 & 83.90 ± 0.10  \\
SGC~\citep{pmlr-v97-wu19e}  &   70.31 ± 0.23   &  52.03 ± 0.84   &   67.79 ± 0.27  &  81.21 ± 0.12\\
GCN-NSampler~\cite{kipf2017semisupervised,Zeng2020GraphSAINT} &  73.51 ± 1.31    &  63.75 ± 0.77   &  68.50 ± 0.23  &  83.84 ± 0.42  \\
GAT-NSampler~\cite{gat,Zeng2020GraphSAINT} &   74.63 ± 1.24   &  62.32 ± 0.65   &  67.63 ± 0.23  & 85.17 ± 0.32   \\
SIGN~\citep{sign_icml_grl2020} &    71.24 ± 0.46  &   68.01 ± 0.25  &   70.28 ± 0.25 & 80.98 ± 0.31  \\ \midrule

NodeFormer~\citep{wu2022nodeformer}  &    77.45 ± 1.15      &    68.32 ± 0.45   &     59.90 ± 0.42  & 87.85 ± 0.24 \\
DIFFormer~\citep{Wu2023DIFFormerS}   &    79.49 ± 0.44      &   69.24 ± 0.76    &   64.94 ± 0.25   & 84.00 ± 0.07 \\
SGFormer~\citep{wu2023sgformer}    &    \third{79.53 ± 0.38}      &    \third{73.76 ± 0.24}   &     \third{72.63 ± 0.13}   &  \third{89.09 ± 0.10} \\ 
\midrule
 
\textbf{SMPNN} &  \second{83.15 ± 0.24}   & \second{79.76 ± 0.19} &  \second{73.75 ± 0.24} & \second{90.61 ± 0.05} 
\\ 
\textbf{SMPNN + Attention} &   \first{83.65 ± 0.35}  &  \first{80.09 ± 0.12}  &  \first{74.38 ± 0.16}  & \first{90.79 ± 0.04} \\ 
Linear Transformer & 60.87 ± 7.23  & 62.72 ± 0.09 &  58.02 ± 0.21  & 79.85 ± 0.08 \\ 
\hline
\end{tabular}}
\end{adjustbox}
\label{tab:main_experimental_results}
\end{table}

Most Graph Transformers in the literature use multiple attention heads (Table~\ref{tab:comp_complexity}); we experimented with up to 4 attention heads but found differences in terms of performance not to be statistically significant, which seems to align with the conclusions of~\citep{wu2023sgformer}. Furthermore, note that as reported in the OGB benchmark paper~\citep{Hu2020OpenGB}, the Max Strongly Connected Component Ratio (MaxSCC Ratio) for the node-level datasets used in these experiments is 1.00 for all datasets (except for ogbn-products, which is 0.97). This indicates that the entire graph is a single strongly connected component, meaning every node is reachable from every other node. Although this does not guarantee that information bottlenecks will not be present, a high MaxSCC Ratio denotes high inter-connectivity, where information or influence can rapidly spread among a large proportion of the network, which may explain why attention is not that important in such settings. Apart from SMPNN and SMPNN augmented with attention, we also include a Linear Transformer baseline, which corresponds to removing the GCN layer from our architecture and substituting local convolution with global linear attention. This also performs worse than SMPNNs, which suggests that in these datasets, the locality inductive bias is important.

\textbf{100M Node Graph Dataset and Scalability Experiments} (Table~\ref{tab:100m}). We test the scalability of our pipeline on the ogbn-papers-100M dataset. Our main competitors are SIGN~\citep{sign_icml_grl2020} and SGFormer~\citep{wu2023sgformer}, as other Graph Transformers have not been able to scale to this dataset. We outperform SGFormer without requiring attention, again demonstrating scalability.

\begin{table}[h!]

\centering
\caption{Test results (mean accuracy ± standard deviation) on ogbn-papers-100M dataset.}

\scalebox{0.6}{

\begin{tabular}{lc}
\hline
   Dataset    & \textbf{ogbn-papers-100M} \\ 

Nodes      &  111,059,956 \\
Edges      &  1,615,685,872 \\\hline

MLP  &   47.24 ± 0.31  \\
GCN~\citep{kipf2017semisupervised}  & 63.29 ± 0.19 \\
SGC~\citep{pmlr-v97-wu19e}  & 63.29 ± 0.19 \\
GCN-NSampler~\cite{kipf2017semisupervised,Zeng2020GraphSAINT} &  62.04 ± 0.27  \\
GAT-NSampler~\cite{gat,Zeng2020GraphSAINT} &  63.47 ± 0.39
  \\
SIGN~\citep{sign_icml_grl2020} &   \third{65.11 ± 0.14}  \\ \midrule
SGFormer~\citep{wu2023sgformer}    &  \second{66.01 ± 0.37} \\ 
\midrule
\textbf{SMPNN} & \first{66.21± 0.10} \\ 
\hline
\end{tabular}}

\label{tab:100m}
\end{table}

\textbf{Additional Image, Text, and Spatio-Temporal Benchmarks} (Tables~\ref{tab:image_text_results} and ~\ref{tab:spatiotemporal_results}). For completeness, we also test our architecture on image and text classification with low label rates, as well as on spatio-temporal dynamics prediction, following~\citep{Wu2023DIFFormerS}. Our model achieves the best performance in most configurations for the CIFAR, STL, and 20News datasets. Likewise, in Table~\ref{tab:spatiotemporal_results}, SMPNNs are also competitive in spatio-temporal prediction. These results demonstrate that our architecture is applicable to a variety of tasks.

\begin{table}[htbp!]
\caption{Test results (mean accuracy ± standard deviation over 5 runs) on Image and Text datasets.}
\centering

  \begin{adjustbox}{max width=0.6\textwidth}
\begin{tabular}{lccccccccc}
\hline
   Dataset    & \multicolumn{3}{|c|}{\textbf{CIFAR}} & \multicolumn{3}{|c|}{\textbf{STL}} & \multicolumn{3}{|c|}{\textbf{20News}}  \\ 

Labels      & 100 & 500 & 1000 & 100 & 500 & 1000 & 1000 & 2000 & 4000\\\hline

MLP & 65.9 ± 1.3& 73.2 ± 0.4 & 75.4 ± 0.6 & 66.2 ± 1.4 & \third{73.0 ± 0.8} & 75.0 ± 0.8 & 54.1 ± 0.9 & 57.8 ± 0.9 & 62.4 ± 0.6\\
ManiReg & 67.0 ± 1.9 & 72.6 ± 1.2 & 74.3 ± 0.4 & 66.5 ± 1.9 & 72.5 ± 0.5 & 74.2 ± 0.5 & 56.3 ± 1.2 & 60.0 ± 0.8 & 63.6 ± 0.7 \\
GCN-kNN & 66.7 ± 1.5 & 72.9 ± 0.4 & 74.7 ± 0.5 & \third{66.9 ± 0.5} & 72.1 ± 0.8 & 73.7 ± 0.4 & 56.1 ± 0.6 & 60.6 ± 1.3 & 64.3 ± 1.0 \\
GAT-kNN & 66.0 ± 2.1 & 72.4 ± 0.5 & 74.1 ± 0.5 & 66.5 ± 0.8 & 72.0 ± 0.8 & 73.9 ± 0.6 & 55.2 ± 0.8 & 59.1 ± 2.2 & 62.9 ± 0.7 \\
DenseGAT & OOM & OOM & OOM & OOM & OOM & OOM & 54.6 ± 0.2 & 59.3 ± 1.4 & 62.4 ± 1.0 \\
GLCN & 66.6 ± 1.4 & 72.8 ± 0.5 & 74.7 ± 0.3 & 66.4 ± 0.8 & 72.4 ± 1.3 & 74.3 ± 0.7 & 56.2 ± 0.8 & 60.2 ± 0.7 & 64.1 ± 0.8 \\\midrule

DIFFormer-a & \first{69.3 ± 1.4} & \third{74.0 ± 0.6} & \third{75.9 ± 0.3} & 66.8 ± 1.1 & 72.9 ± 0.7 & \third{75.3 ± 0.6} & \second{57.9 ± 0.7} & \second{61.3 ± 1.0} & \third{64.8 ± 1.0} \\

DIFFormer-s & \second{69.1 ± 1.1} & \second{74.8 ± 0.5} & \second{76.6 ± 0.3} & \second{67.8 ± 1.1} & \second{73.7 ± 0.6} & \second{76.4 ± 0.5} & 57.7 ± 0.3 & 61.2 ± 0.6 & \first{65.9 ± 0.8} \\
\midrule
\textbf{SMPNN} & \third{68.6 ± 1.8} & \first{76.2 ± 0.5} & \first{78.0 ± 0.3} & \first{67.9 ± 0.9} & \first{73.9 ± 0.7} & \first{76.7 ± 0.5} & \first{58.9 ± 0.8} & \first{62.7 ± 0.6} & \second{65.6 ± 0.6} \\\hline
\end{tabular}
\end{adjustbox}
\label{tab:image_text_results}
\end{table}

\begin{table}[hbpt!]
\centering
\caption{Test results (mean MSE ± standard deviation over 5 runs) for spatio-temporal dynamics prediction datasets. Lower is better. \textit{w/o g} stands for \textit{without graph}.}
\begin{adjustbox}{max width=0.4\textwidth}
\begin{tabular}{lccc}
\hline
   Dataset    & \textbf{Chickenpox} & \textbf{Covid} & \textbf{WikiMath}  \\ \hline
MLP & 0.924 ± 0.001 & 0.956 ± 0.198 & 1.073 ± 0.042 \\
GCN & 0.923 ± 0.001 & 1.080 ± 0.162 & 1.292 ± 0.125 \\
GAT & 0.924 ± 0.002 & 1.052 ± 0.336 & 1.339 ± 0.073 \\
GCN-kNN & 0.936 ± 0.004 & 1.475 ± 0.560 & 1.023 ± 0.058 \\
GAT-kNN & 0.926 ± 0.004 & 0.861 ± 0.123 & 0.882 ± 0.015 \\
DenseGAT & 0.935 ± 0.005 & 1.524 ± 0.319 & 0.826 ± 0.070 \\ \midrule

DIFFormer-s & \first{0.914 ± 0.006} & 0.779 ± 0.037 & 0.731 ± 0.007 \\
DIFFormer-a & \second{0.915 ± 0.008} & \third{0.757 ± 0.048} & 0.763 ± 0.020\\
DIFFormer-s w/o g & \third{0.916 ± 0.006} & 0.779 ± 0.028 & \third{0.727 ± 0.025} \\
DIFFormer-a w/o g & \third{0.916 ± 0.006} & \first{0.741 ± 0.052} & \second{0.716 ± 0.030} \\ \midrule

\textbf{SMPNN} & \third{0.916 ± 0.006} & \second{0.756 ± 0.048} & \first{0.713 ± 0.032} \\
\hline
\end{tabular}
\end{adjustbox}
\label{tab:spatiotemporal_results}
\end{table}

\textbf{Ablation on SMPNN Architecture Components.} In Table~\ref{tab:ablations}, we conduct an ablation study. First, we ablate the standard SMPNN architecture by removing residual connections after graph convolution layers. In line with the theory presented in Section~\ref{Theoretical Justification}, we observe that this indeed has a significant impact on model performance. Next, we test the effect of fixing the learnable scaling coefficients ($\alpha_1^{(l)}=\alpha_2^{(l)}=1,\forall l$) while retaining the residuals. This decreases performance for ogbn-proteins and ogbn-arxiv, whereas it slightly increases performance for pokec and ogbn-products. We decide to keep the scalings since they make the model more expressive in general. Additionally, we test removing the pointwise feedforward transformation, which leads to a slight drop in performance for all datasets and seems particularly relevant for ogbn-proteins. This experiment suggests that the most critical part of the architecture is the message-passing component, as expected. Lastly, for completeness, we also test removing the LayerNorm before the GCN, which generally leads to a drop in performance. However, in the case of ogbn-arxiv, this configuration obtains a test accuracy of $74.46\%$, which outperforms all other models and baselines. In conclusion, there is no free lunch; depending on the dataset, slight modifications may lead to improved performance and, interestingly, have even more of an effect than augmenting SMPNNs with linear attention. Nevertheless, we find that, in general, the standard SMPNN block is robust and leads to SOTA results across several datasets.

\begin{table}[htbp!]
\caption{Results (mean test set accuracy ± standard deviation over 5 runs) for ablation studies on OGBN and pokec large graph datasets.}
\centering
\begin{adjustbox}{max width=0.6\textwidth}
\begin{tabular}{llcccc}
\hline
   Model    & Removed & \textbf{ogbn-proteins} & \textbf{pokec} & \textbf{ogbn-arxiv} & \textbf{ogbn-products}  \\\midrule

   SGFormer~\citep{wu2023sgformer}    & N/A &    79.53 ± 0.38    &    73.76 ± 0.24   &     72.63 ± 0.13   &  89.09 ± 0.10 \\ 
\textbf{SMPNN} & N/A & \first{83.15 ± 0.24} & \second{79.76 ± 0.19} &  \second{73.75 ± 0.24} & \second{90.61 ± 0.05}\\ \midrule

SMPNN & Residual & 68.49 ± 2.59 & 68.17 ± 8.22 & 39.67 ± 14.60 & 89.69 ± 0.05\\
SMPNN & $\alpha$ &   \second{82.90 ± 0.34} & \first{80.10 ± 0.12} & 73.00 ± 0.59 & \first{90.77 ± 0.04}\\
SMPNN & FF & 80.51 ± 0.61 & 78.40 ± 0.20 &  \third{73.25 ± 0.58}& 90.01 ± 0.10\\
SMPNN & GCN LayerNorm & \third{80.74 ± 0.91} & \third{79.42 ± 0.15} & \first{74.46 ± 0.22} & \third{90.54 ± 0.08}\\
\hline
\end{tabular}
\end{adjustbox}
\label{tab:ablations}
\end{table}

\textbf{Deep Models are Possible with SMPNNs.} It is well known that conventional message-passing GNNs~\citep{kipf2017semisupervised,gat,gatv2} are restricted to shallow architectures, since their performance degrades when stacking many layers. Next, we demonstrate that deep models are possible with SMPNNs. In particular, we perform an experiment in which we progressively increase the number of SMPNN layers while keeping all other hyperparameters constant for the ogbn-arxiv and ogbn-proteins datasets, from 2 to 12 layers (Table~\ref{tab:deep}). Adding up to 6 layers seems to improve performance, and it plateaus thereafter. Additionally, we perform another experiment in which we follow the same procedure but for an SMPNN without residual connections after convolutions. In this case, we observe a clear drop in performance after 4 layers, which aligns with our theoretical understanding from Section~\ref{Theoretical Justification}.   

\begin{table}[htbp!]
\centering
\caption{Results (mean accuracy ± standard deviation over 5 runs) on deep SMPNN architectures.}
  \begin{adjustbox}{max width=0.6\textwidth}
\begin{tabular}{llccccccc}
\hline

       & & \multicolumn{7}{|c|}{\textbf{ogbn-arxiv}, No. layers} \\
   Model    & Removed & 2 & 3 & 4 & 6 & 8 & 10 & 12 \\\midrule
\textbf{SMPNN} & N/A & 73.18 ± 0.30 & 73.33 ± 0.38 & 73.65± 0.49 & 73.75 ± 0.24 &73.71 ± 0.54 & 73.68 ± 0.51 & 73.73 ± 0.50\\ 
SMPNN & Residual & 72.56 ± 0.91 & 72.61 ± 0.83 & 71.38 ± 1.48 & 39.67 ± 14.60 & 25.87 ± 1.43 & 26.10 ± 1.68 & 26.59 ± 2.25\\
\hline

       & & \multicolumn{7}{|c|}{\textbf{ogbn-proteins}, No. layers} \\
   Model    & Removed & 2 & 3 & 4 & 6 & 8 & 10 & 12 \\\midrule
\textbf{SMPNN} & N/A & 81.44 ± 0.36 & 82.57 ± 0.45 & 82.64 ± 0.73 & 83.15 ± 0.24 & 83.36 ± 0.34 & 83.28 ± 0.36 & 83.45 ± 0.41
\\ 
SMPNN & Residual & 76.83 ± 1.56  & 68.93 ± 2.41 & 69.98 ± 3.08 & 68.49 ± 2.59 & 64.22 ± 1.22  & 65.54 ± 0.99 & 65.90 ± 1.35 \\
\hline
\end{tabular}
\end{adjustbox}
\label{tab:deep}
\end{table}

\textbf{GPU Memory Scaling.} We proceed to examine the model's scalability utilizing the ogbn-products dataset and randomly sampling a subset of nodes for graph mini-batching (from 10K to 100K), following the analysis in~\citep{wu2023sgformer}. In Figure~\ref{fig:gpu_usage} (Appendix~\ref{GPU Memory scaling plots}), we report the maximum GPU memory usage in GB against the number of edges in the subgraph induced by the sampled nodes. We can see that it asymptotes towards linearity in the number of edges as discussed in Section~\ref{Computational Complexity and Comparison with Graph Transformers}. For small subgraphs $N$ dominates the complexity term and results in a steeper slope. Additionally, we compare the computational overhead of different SMPNN configurations to that of SGFormer on a Tesla V100 GPU, see Figure~\ref{fig:gpu_usage_nodes} (Appendix~\ref{GPU Memory scaling plots}). Interestingly, the scaling factors $\alpha$ seem to have a substantial computational impact as the number of nodes increases, which, combined with the ablations in Table~\ref{tab:ablations}, suggests it may be better to remove them for very large graphs. Moreover, we find that the GPU consumption of SMPNNs can be substantially reduced when omitting the pointwise feedforward part of the block. This reduces computation cost substantially below that of SGFormer. Hence, one can trade GPU usage for performance: note that as shown in Table~\ref{tab:ablations}, SMPNNs without feedforwards still outperform the SGFormer baseline. SGFormer does not use feedforward networks or traditional attention mechanisms. Instead, it employs a simplified linear version of attention and even omits the softmax activation function.

\section{Conclusion}
\label{Conclusion}

Our SMPNN framework demonstrates that standard graph convolution, when packaged within a Pre-LN Transformer-style residual block, provides a simple yet effective architecture. We empirically show that SMPNNs enable deep message-passing networks without the degradation traditionally observed in GNNs and scale efficiently to large graphs, achieving strong empirical performance and outperforming recent scalable Graph Transformers without relying on global attention. We complement these results with a new universality-based theoretical analysis explaining why residualized convolution preserves expressivity. We do not claim to introduce a fundamentally new primitive. 

Our results suggest that attention is often unnecessary (or provides only marginal gains) on ``traditional'' large transductive graphs. This may be partly explained by the local nature and high MaxSCC ratios of commonly used benchmarks. However, more broadly and importantly, the limited impact of attention in these settings may also arise from the absence of positional encodings, which are rarely used in large-graph models due to their computational cost (e.g., Laplacian positional encodings). Without positional signals to distinguish node positions, attention mechanisms may effectively degenerate into computing near-average feature aggregations. This consideration is likely important for emerging long-range graph benchmarks~\citep{liang2026towards,bamberger2025on}, which we leave for future work.

\clearpage
\bibliography{gram2026}
\bibliographystyle{gram2026}

\clearpage
\appendix

\section{Additional Background on Graph Representation Learning}
\label{Additional Background}

\textbf{Graph Representation Learning and Notation.} We denote a graph with $\mathcal{G} = (\mathcal{V}, \mathcal{E})$, where $\mathcal{V}$ represents a set of nodes (or vertices), and $\mathcal{E} \subseteq (\mathcal{V} \times \mathcal{V})$ is a set of 2-tuples signifying edges (or links) within the graph. 
For any pair of nodes $v_i$ and $v_j$ within $\mathcal{G}$, their connection is encoded as $(v_i, v_j) \in \mathcal{E}$ if the edge originates from $v_i$ and terminates at $v_j$. The (one-hop) neighborhood of node $v_i$ constitutes the set of nodes sharing an edge with $v_i$, denoted by $\mathcal{N}(v_i) = \{v_j | (v_i, v_j) \in \mathcal{E}\}$. 
To encode the structural connectivity amongst nodes in a graph of $N=|\mathcal{V}|$ nodes and $E=|\mathcal{E}|$ edges, an adjacency matrix $\mathbf{A} \in \mathbb{R}^{N \times N}$ is employed. This adjacency matrix may adopt a weighted or unweighted representation. In the case of a weighted adjacency matrix, the entries $A_{ij}\in\mathbb{R}$ symbolize the strength of the connection, with $A_{ij} = 0$ denoting absence of a connection if $(v_i, v_j) \notin \mathcal{E}$. 
Conversely, in an unweighted adjacency matrix, $A_{ij} = 1$ signifies the presence of an edge, and $A_{ij} = 0$ otherwise. The degree matrix $\mathbf{D} \in \mathbb{R}^{N \times N}$ is defined as the matrix where each entry on the diagonal is the row-sum of the adjacency matrix: $D_{ii} = \sum_j A_{ij}$. Based on this, we can define the graph Laplacian as $\mathbf{L} = \mathbf{D} - \mathbf{A}$. The normalized graph Laplacian, denoted as $\mathbf{L}_{\text{norm}}= \mathbf{I} - \mathbf{D}^{-\frac{1}{2}} \mathbf{A} \mathbf{D}^{-\frac{1}{2}}$, is a variation of the graph Laplacian that takes into account the degrees of the nodes. Additionally, within our context, we are interested in graphs with node features. Each node $v_i \in \mathcal{V}$ is accompanied by a $D$-dimensional feature vector $\mathbf{x}_i \in \mathbb{R}^{1 \times D}$. 
The feature vectors for all nodes within the graph can be aggregated into a single matrix $\mathbf{X} \in \mathbb{R}^{N \times D}$ by stacking them along the row dimension.

\textbf{Message Passing Neural Networks (MPNNs)} are a class of Graph Neural Networks (GNNs). MPNNs operate by means of message passing, wherein nodes exchange vector-based messages to refine their representations, leveraging the graph connectivity structure as a geometric prior. Following~\cite{bronstein2021geometric}, a message passing GNN layer $l$ (excluding edge and graph-level features for simplicity) over a graph $\mathcal{G}$ is defined as: $\mathbf{x}_{i}^{(l+1)}
    =
        \phi\biggl(
            \mathbf{x}_{i}^{(l)},\bigoplus_{j\in\mathcal{N}(v_{i})}\psi(\mathbf{x}_{i}^{(l)},\mathbf{x}_{j}^{(l)})
        \biggr),$ where $\psi$ denotes a message passing function, $\bigoplus$ is some permutation-invariant aggregation operator, and $\phi$ is a readout or update function. Both $\psi$ and $\phi$ are learnable and typically implemented as MLPs. It is important to note that the update equation is \textit{local}, and that at each layer each node only communicates with its one-hop neighborhood given by the adjacency matrix of the graph. More distant nodes are accessed by stacking multiple message-passing layers.

\section{Augmenting SMPNNs with Attention}
\label{Augmenting SMPNN with Attention}

SMPNNs, in principle, perform local convolution but can also be enhanced with attention. Here, we discuss the specific attention mechanism implemented in our experiments.

\textbf{Linear Global Attention.} Global attention can be seen as computing message passing over a fully connected graph. In our block, attention is computed with respect to a virtual node rather than through full $\mathcal{O}(N^2)$ pairwise attention, which is more suitable for scaling to large graphs. Given the input query, key, and value matrices,
\[
\mathbf{Q}=\mathbf{X}^{(l)}\mathbf{W}_{Q}\in\mathbb{R}^{N \times D}, \qquad
\mathbf{K}=\mathbf{X}^{(l)}\mathbf{W}_{K}\in\mathbb{R}^{N \times D}, \qquad
\mathbf{V}=\mathbf{X}^{(l)}\mathbf{W}_{V}\in\mathbb{R}^{N \times D},
\]
we apply the following operations. First, we compute a summed query vector incorporating contributions from all nodes, and normalize both this summed query vector and the keys:
\begin{equation}
    \mathbf{Q}_\mathrm{sn}
    =
    \frac{\sum_{i=1}^{N} \mathbf{Q}_i}{\left\|\sum_{i=1}^{N} \mathbf{Q}_i\right\|_2}
    \in \mathbb{R}^{1\times D},
    \qquad
    \mathbf{K}_{n,i}
    =
    \frac{\mathbf{K}_i}{\|\mathbf{K}_i\|_2}
    \in \mathbb{R}^{1\times D},
\end{equation}
for each node $i=1,\dots,N$. Based on these, we compute attention weights
\begin{equation}
    \mathbf{A}_{\mathrm{attention}}
    =
    \mathrm{softmax}\!\left(\mathbf{Q}_\mathrm{sn}\mathbf{K}_n^{T}\right)
    \in \mathbb{R}^{1\times N},
\end{equation}
where $\mathbf{K}_n\in\mathbb{R}^{N\times D}$ is the matrix whose $i$-th row is $\mathbf{K}_{n,i}$. We then use these weights to aggregate the values into a single global feature vector
\begin{equation}
    \mathbf{g}^{(l)}
    =
    \mathbf{A}_{\mathrm{attention}}\mathbf{V}
    \in \mathbb{R}^{1\times D},
\end{equation}
which is broadcast to all nodes:
\begin{equation}
    \mathbf{X}^{(l+1)}
    =
    \mathbf{1}_N \mathbf{g}^{(l)}
    \in \mathbb{R}^{N\times D},
\end{equation}
where $\mathbf{1}_N\in\mathbb{R}^{N\times 1}$ is a column vector of ones. Equivalently,
\begin{equation}
    \mathbf{X}^{(l+1)}
    =
    \left(\mathbf{1}_N \mathbf{A}_{\mathrm{attention}}\right)\mathbf{V},
\end{equation}
where $\mathbf{1}_N \mathbf{A}_{\mathrm{attention}}\in\mathbb{R}^{N\times N}$ is a matrix with all rows equal to the same attention vector.

The operations above extend naturally to multi-head attention. If we use $H$ attention heads, then for each head $h$ we compute
\begin{equation}
    \mathbf{A}_{\mathrm{attention}}^{(h)}
    =
    \mathrm{softmax}\!\left(\mathbf{Q}_\mathrm{sn}^{(h)}(\mathbf{K}_n^{(h)})^{T}\right)
    \in \mathbb{R}^{1\times N},
\end{equation}
and the corresponding head output
\begin{equation}
    \mathbf{g}^{(h)}
    =
    \mathbf{A}_{\mathrm{attention}}^{(h)}\mathbf{V}^{(h)}
    \in \mathbb{R}^{1\times D}.
\end{equation}
The final global feature matrix is then obtained by aggregating (using the mean) the contributions from all heads:
\begin{equation}
    \mathbf{X}^{(l+1)}
    =
    \mathbf{1}_N
    \left(
    \frac{1}{H}\sum_{h=1}^{H}\mathbf{g}^{(h)}
    \right)
    \in \mathbb{R}^{N\times D}.
\end{equation}

\textbf{Message-Passing with Parallel Attention.} To augment SMPNN we use the following procedure:

\begin{equation}
    \mathbf{H}_{1, local}^{(l)} = \textrm{LayerNorm}(\mathbf{X}^{(l)}).
\end{equation}

\begin{equation}
    \mathbf{H}_{2,local}^{(l)} = \operatorname{SiLU}
    \big(
        \tilde{\mathbf{A}}
        \mathbf{H}_{1,local}^{(l)}\mathbf{W}_{1}^{(l)}
    \big),
\end{equation}

\begin{equation}
    \mathbf{H}_{1, global}^{(l)} = \textrm{LayerNorm}(\mathbf{X}^{(l)}),
\end{equation}

\begin{equation}
    \mathbf{H}_{2,global}^{(l)} = \operatorname{LinearGlobalAttention}
    \big(
        \mathbf{H}_{1,global}^{(l)}
    \big),
\end{equation}

\begin{equation}
    \mathbf{H}_{2}^{(l)} = \alpha_1^{(l)}\left(\mathbf{H}_{2,local}^{(l)} + \mathbf{H}_{2,global}^{(l)}\right) + \mathbf{X}^{(l)},
\end{equation}

where LayerNorms have different parameters for local and global features. Pointwise-feedforward operations are kept as in the original SMPNN formulation. 

The update equations above are reminiscent of hybrid Graph Transformers~\citep{Rampek2022RecipeFA,wu2022nodeformer,Wu2023DIFFormerS,wu2023sgformer}, and we find that they do not provide tangible improvements over standard SMPNNs. Note that we also experimented with other attention mechanisms, which did not help.

\section{Oversmoothing and Residual Connections} 
\label{appendix:Oversmoothing and Residual Connections}

Oversmoothing is regarded as the tendency of node features to approach the same value after several message-passing transformations. This phenomenon has prompted the adoption of relatively shallow GNNs, as adding many layers has traditionally resulted in node-level features that are too similar to each other and, hence, indistinguishable for downstream learners. This has hindered scalability and led to models with orders of magnitude fewer parameters than, for instance, counterparts in the literature on 2D generative modeling~\citep{esser2024scaling} and LLMs~\citep{touvron2023llama}, which have already adopted architectures with billions of parameters.

In general, standalone message-passing GNNs tend to behave as low-pass filters and may consequently lead to oversmoothing. However, this does not always need to be the case: graph convolution can also magnify high frequencies if it is augmented with residual connections. Following~\citep{digiovanni2023understanding}, message-passing is characterized as being {\em low-frequency dominant}~(LFD) or {\em high-frequency dominant}~(HFD) depending on the asymptotic behavior of the normalized Dirichlet energy of the system as the number of diffusion layers $l\rightarrow \infty$. These lead to oversmoothing and oversharpening, respectively.

\begin{definition}[Graph Dirichlet Energy of a Message Passing System~\citep{Zhou2005RegularizationOD}] We define the Dirichlet Energy as a map $\mathfrak{E}^{Dir}:\mathbb{R}^{N\times D} \rightarrow \mathbb{R}$
\begin{equation}
    \mathfrak{E}^{Dir}(\mathbf{X}) \eqdef 
    \frac{1}{2} \sum_{(i,j)\in\mathcal{E}}||(\nabla\mathbf{X})_{ij}||^2\,;\, (\nabla\mathbf{X})_{ij} \eqdef \frac{\mathbf{x}_j}{\sqrt{\mathrm{deg}(v_j)}}-\frac{\mathbf{x}_i}{\sqrt{\mathrm{deg}(v_i)}},
\end{equation}

where $(\nabla\mathbf{X})_{ij}$ is the edge-wise gradient of the graph node features $\mathbf{X}\in\mathbb{R}^{N\times D}$.

\end{definition}

\begin{definition}[Graph Frequency Dominance~\citep{digiovanni2023understanding}] We say a message passing GNN is LFD if its normalized Dirichlet energy $\mathfrak{E}^{Dir}(\mathbf{X}^{(l)})/||\mathbf{X}^{(l)}||^2\rightarrow \lambda_1 = 0$ for $l\rightarrow \infty$ and HFD if $\mathfrak{E}^{Dir}(\mathbf{X}^{(l)})/||\mathbf{X}^{(l)}||^2\rightarrow \lambda_N$ for $l\rightarrow \infty$, where $\mathbf{X}^{(l)}$ encapsulates the node-level features at a given layer $l$ and $1\leq\lambda_N<2$~\citep{Chung1996SpectralGT} is the largest eigenvalue whose corresponding eigenvector of the normalized graph Laplacian captures fine-grained microscopic behavior of the signal on the graph.
\end{definition}

In particular, the above characterization relies on analyzing message-passing from the perspective of gradient flows in the limit of infinitely many layers. Although analyzing the asymptotic behavior of GNNs is interesting from a theoretical perspective, the normalized Dirichlet energy is not directly predictive of model success, since both over-smoothing and over-sharpening will lead to degradation in performance~\citep{digiovanni2023understanding}. We provide an alternative perspective through the lens of universal approximation, which does not rely on asymptotic behavior. Specifically, we demonstrate that the universal approximation property of downstream classifiers is compromised when passing input features through graph convolution layers, but it can be restored with a residual connection.

Moreover, by the results of~\cite{riedi2023singular}, residual connections also make the loss landscape of deep learning models less jagged, which may aid optimization (which we do not tackle in this work).

\section{Proofs}
\label{appendix:Proofs}

\begin{proof}[Proof of Theorem~\ref{thrm:main_0__noresidual_NoUniversality}]
For any $\mathbf X\in\mathbb R^{N\times D}$, define its row-average $\bar{\mathbf x}:=\frac1N\mathbf 1_N^\top \mathbf X\in\mathbb R^{1\times D}.$ Since $\tilde{\mathbf A}=J_N/N=(1/N)\mathbf 1_N\mathbf 1_N^\top$, we have $\tilde{\mathbf A}\mathbf X
=
\frac1N\mathbf 1_N\mathbf 1_N^\top \mathbf X
=
\mathbf 1_N\bar{\mathbf x}.$ Hence every row of $\tilde{\mathbf A}\mathbf X\mathbf W$ is identical. Therefore
the image of $\mathcal L_{\mathcal G,\mathbf W}^{\operatorname{conv}}$ is contained
in the proper subspace $\left\{\mathbf 1_N \mathbf{v}^\top:\; \mathbf{v}\in\mathbb R^D\right\}\subsetneq \mathbb R^{N\times D},$
so $\mathcal L_{\mathcal G,\mathbf W}^{\operatorname{conv}}$ is not injective. Concretely, any two inputs with the same row-average produce the same output.

By the universal approximation theorem, the class
$\mathcal{MLP}_{N\times D}^{\zeta}$ is dense in
$\mathcal C(\mathbb R^{N\times D})$ for the topology of uniform
convergence on compact sets. The map
$\mathcal L_{\mathcal G,\mathbf W}^{\operatorname{conv}}$ is continuous but not injective. Therefore, by
\citep[Theorem 3.4]{kratsios2020non},
precomposition with $\mathcal L_{\mathcal G,\mathbf W}^{\operatorname{conv}}$
does not preserve density.
\end{proof}

\begin{proof}[Proof of Lemma~\ref{lem:injectivity_complete_graph}]
Every $\mathbf X\in\mathbb R^{N\times D}$ admits a unique decomposition $\mathbf X=\mathbf 1_N \mathbf{u}^\top+\mathbf Y$ with $\mathbf 1_N^\top \mathbf Y=0$ for some $\mathbf{u}\in\mathbb R^D$. Indeed, define the row-average vector $\mathbf{u}^\top = \frac{1}{N}\mathbf 1_N^\top \mathbf X$ and set $\mathbf Y = \mathbf X - \mathbf 1_N \mathbf{u}^\top$. Then $\mathbf 1_N^\top \mathbf Y=0$ by construction, and the decomposition is unique because matrices of the form $\mathbf 1_N \mathbf{v}^\top$ (which have identical rows) intersect the subspace $\{\mathbf Y:\mathbf 1_N^\top\mathbf Y=0\}$ only at the zero matrix. Intuitively, this separates the node features into a global average signal $\mathbf 1_N \mathbf{u}^\top$ and variations around the average $\mathbf Y$. Since $\tilde{\mathbf A}=J_N/N$, multiplication by $\tilde{\mathbf A}$ averages the rows of a matrix, yielding $\tilde{\mathbf A}(\mathbf 1_N \mathbf{u}^\top)=\mathbf 1_N \mathbf{u}^\top$ and $\tilde{\mathbf A}\mathbf Y=0$. Applying the residual convolution operator $\mathcal L_{\mathcal G,\mathbf W}^{\operatorname{conv+r}}(\mathbf X)=\tilde{\mathbf A}\mathbf X\mathbf W+\mathbf X$ to the decomposition therefore gives $\mathcal L_{\mathcal G,\mathbf W}^{\operatorname{conv+r}}(\mathbf X)=\mathbf 1_N \mathbf{u}^\top \mathbf W+\mathbf 1_N \mathbf{u}^\top+\mathbf Y=\mathbf 1_N \mathbf{u}^\top (I_D+\mathbf W)+\mathbf Y$. To determine injectivity it suffices to examine the kernel of this linear map. Setting $\mathcal L_{\mathcal G,\mathbf W}^{\operatorname{conv+r}}(\mathbf X)=0$ yields $\mathbf 1_N \mathbf{u}^\top (I_D+\mathbf W)+\mathbf Y=0$. The first term has identical rows while $\mathbf Y$ has rows summing to zero, so they belong to complementary subspaces and must vanish separately. Hence $\mathbf Y=0$ and $\mathbf{u}^\top(I_D+\mathbf W)=0$. This system has only the trivial solution $\mathbf{u}=0$ if and only if $I_D+\mathbf W$ is invertible. Since the eigenvalues of $I_D+\mathbf W$ are $1+\lambda_i$ where $\lambda_i\in\sigma(\mathbf W)$, this occurs precisely when $-1\notin\sigma(\mathbf W)$.
\end{proof}

\begin{proof}[Proof of Theorem~\ref{thrm:main_1_residual_Universal}]
By the universal approximation theorem,
$\mathcal{MLP}_{N\times D}^{\zeta}$ is dense in
$\mathcal C(\mathbb R^{N\times D})$ for the topology of uniform
convergence on compact sets. By
Lemma~\ref{lem:injectivity_complete_graph},
$\mathcal L_{\mathcal G,\mathbf W}^{\operatorname{conv+r}}$ is continuous and
injective whenever $-1\notin \sigma(\mathbf W)$. Hence
\citep[Theorem 3.4]{kratsios2020non} implies that precomposition with
$\mathcal L_{\mathcal G,\mathbf W}^{\operatorname{conv+r}}$ preserves density.

For the probabilistic statement, the exceptional set $\{\mathbf W:\det(I_D+\mathbf W)=0\}$ is the zero set of a nonzero polynomial in the entries of $\mathbf W$, so it
has Lebesgue measure zero. Therefore it is hit with probability zero by any
absolutely continuous distribution.
\end{proof}

\begin{proof}[Proof of Corollary~\ref{cor:universality_general_graph}]
Let $\operatorname{vec}(\cdot)$ denote the vectorization operator, which stacks the columns of a matrix $\mathbf{X}\in\mathbb{R}^{N\times D}$ into a vector in $\mathbb{R}^{ND}$. The vectorization identity $\operatorname{vec}(\mathbf{A}\mathbf{X}\mathbf{B})=(\mathbf{B}^\top\otimes \mathbf{A})\operatorname{vec}(\mathbf{X})$ gives
$\operatorname{vec}(\tilde{\mathbf A}\mathbf X\mathbf W+\mathbf X)
=
(\mathbf W^\top\otimes\tilde{\mathbf A})\operatorname{vec}(\mathbf X)
+
\operatorname{vec}(\mathbf X)
=
\left(I_{ND}+\mathbf W^\top\otimes\tilde{\mathbf A}\right)\operatorname{vec}(\mathbf X)$, where $\otimes$ is the Kronecker product. Thus injectivity of $\mathcal L^{\operatorname{conv+r}}_{\mathcal G,\mathbf W}$ is equivalent to invertibility of
$I_{ND}+\mathbf W^\top\otimes\tilde{\mathbf A}$. If $\|\tilde{\mathbf A}\|_2\,\|\mathbf W\|_2<1$, then
$\|\mathbf W^\top\otimes\tilde{\mathbf A}\|_2
= \|\mathbf W^\top\|_2\,\|\tilde{\mathbf A}\|_2 =
\|\mathbf W\|_2\,\|\tilde{\mathbf A}\|_2<1,$
so $I_{ND}+\mathbf W^\top\otimes\tilde{\mathbf A}$ is invertible by the Neumann series. Hence the feature map is injective. Since the map is linear and therefore continuous, Theorem~3.4 of \citet{kratsios2020non} implies that the class $\mathcal{F}^{\operatorname{residual}}_{\mathcal{G},\mathbf{W}}$ is dense in $\mathcal{C}(\mathbb R^{N\times D})$.
\end{proof}

\section{Training Details}
\label{Training Details}

\textbf{Dataset information.} All datasets used in this work are publicly available. For OGB datasets, see~\citep{Hu2020OpenGB}; for pokec, see~\citep{snapnets}; and for the spatio-temporal datasets, see PyTorch Geometric Temporal~\citep{temporal_data}. Following DIFFormer~\citep{Wu2023DIFFormerS}, we perform experiments on both image and text datasets to test the applicability of SMPNNs to a variety of tasks. The procedure described next is exactly the same as in~\citep{Wu2023DIFFormerS}. We use STL(-10) and CIFAR(-10) as our image datasets. For the first, we use all 13,000 images, each classified into one of 10 categories. In the case of the latter, CIFAR, the dataset was originally pre-processed in previous work using 1,500 images from each of the 10 categories, resulting in a total of 15,000 images. For both datasets, 10, 50, or 100 instances per class were randomly selected for the training set (we use the same random splits as in the baselines for reproducibility and to obtain a fair comparison), 1,000 instances for validation, and the remaining instances for testing.

\textbf{Train, validation, and test splits.} We use all public data splits when available, such as in the case of the OGB benchmarks~\citep{Hu2020OpenGB}. Otherwise we follow the data split in the literature~\citep{wu2022nodeformer,Wu2023DIFFormerS,wu2023sgformer} for a consistent comparison. Note that the code for these baselines is available online, including torch seeds for datasets with random splits. We use the \textit{exact same} seeds (\texttt{123} in most cases).

\textbf{Graph Sampling and Batching for Training and Inference.} For ogbn-arxiv, CIFAR, STL, 20News, Chickenpox, Covid, and WikiMath, we employ full-graph training. Specifically, we input the entire graph into the model and simultaneously predict all node labels for the loss computation using a single GPU. During inference, we also use the entire graph as input and calculate the evaluation metric based on the predictions for the validation and test set nodes. For the rest of the datasets (except ogbn-papers-100M discussed later), due to their large sizes, we adopt the mini-batch training approach of~\citep{wu2022nodeformer,Wu2023DIFFormerS,wu2023sgformer}. More concretely, the subsampling procedure of the Graph Transformer baselines uses the following approach for training: a batch of nodes from the large graph dataset is selected at random, and based on this, the induced subgraph is sampled by extracting the edges connecting the selected nodes from the full graph adjacency matrix (the induced subgraph is obtained using \texttt{subgraph} from \texttt{torch geometric utils}). For inference, the entire graph is fed into the model on CPU. For the exceptionally large ogbn-papers-100M graph, which cannot fit into CPU memory, we apply the same mini-batch partition strategy used during training for inference using a neighbor sampler with 3-hops and 15, 10, and 5 neighbors per hop using the standard \texttt{NeighborLoader} from \texttt{torch geometric}, in line with~\citep{wu2023sgformer}.

\textbf{GPU Memory Scaling Experiments.} In these experiments, we use a single Tesla V100 GPU. We use the same training configuration for all networks: node batch sizes of 10K, 20K, 40K, 60K, 80K, and 100K (using \texttt{subgraph}), no weight decay, Adam optimizer with a learning rate of $10^{-3}$, and dropout of $0.1$. We set all model configurations to have 256 hidden channels and 6 layers, modifying only the parts of the configuration discussed in the main text. For SGFormer, we increase the number of GNN layers to 6 and fix the linear attention layer to 1, as in the original paper~\citep{wu2023sgformer}.

\textbf{Model Configurations.} For most of the large-scale graph experiments (unless otherwise stated), we use SMPNNs with 6 layers. Note that we test other configurations; for instance, in Table~\ref{tab:deep}. For the image, text, and spatio-temporal datasets, we use 2 layers to avoid overfitting. The exact configurations can be found in the supplementary material.

\section{GPU Memory Scaling Plots}
\label{GPU Memory scaling plots}

\begin{figure}[hbpt!]
\centering
\centerline{\includegraphics[width=0.7\linewidth]{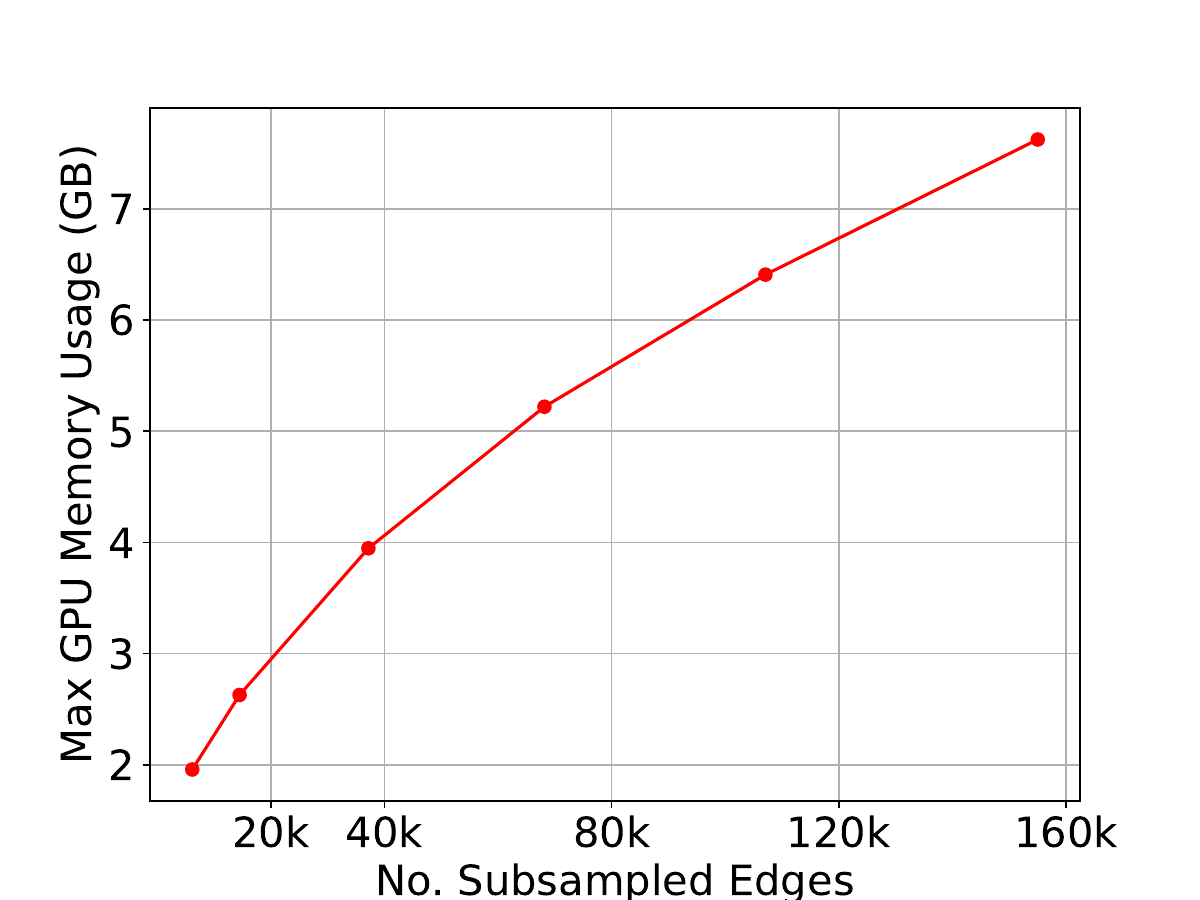}}
\caption{Max GPU consumption versus number of edges in the subgraph for SMPNN with 6 layers.}
\label{fig:gpu_usage}
\end{figure}

\begin{figure}[hbpt!]
\centering
\centerline{\includegraphics[width=0.7\linewidth]{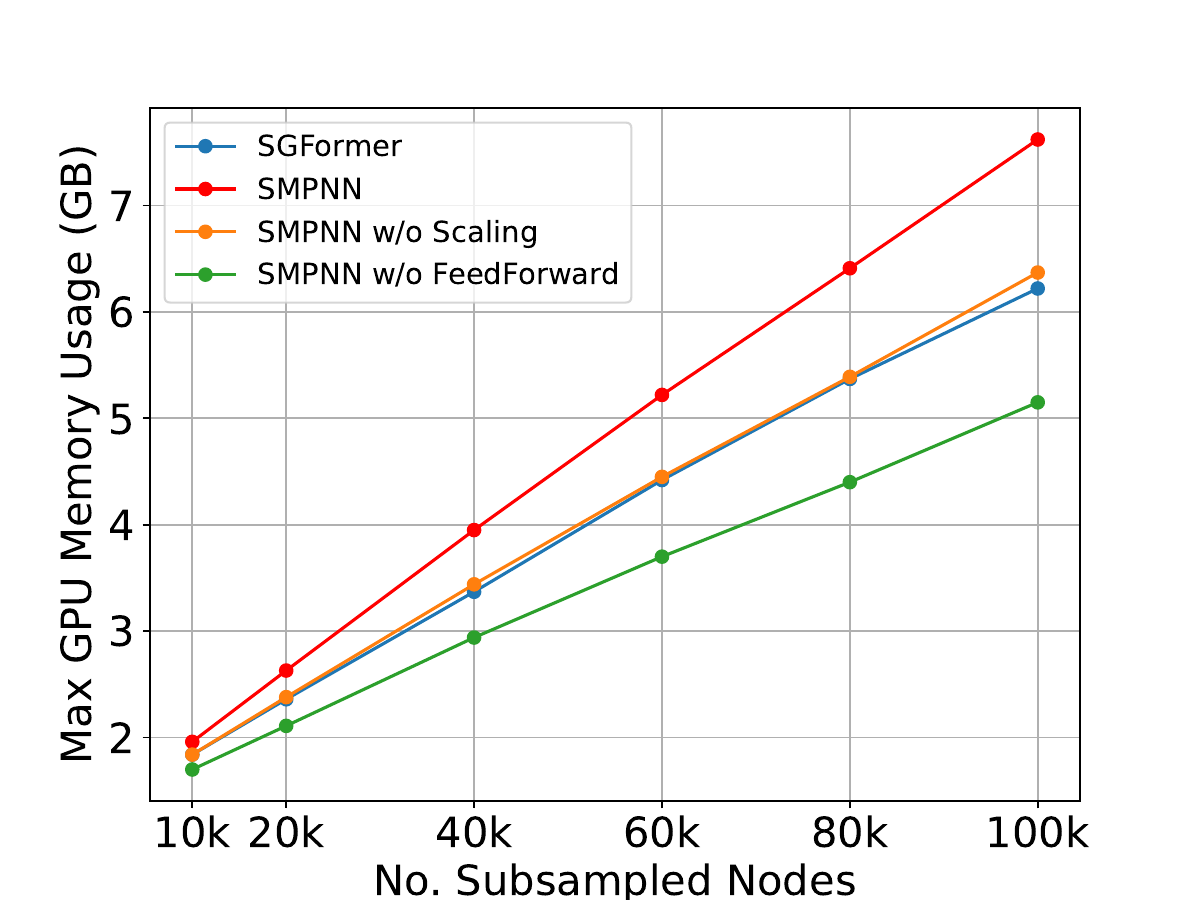}}
\caption{Max GPU consumption versus the number of nodes in the batch subgraph for different models with 6 layers.}
\label{fig:gpu_usage_nodes}
\end{figure}

\end{document}